\documentclass{article}

\usepackage[preprint]{neurips_2026}

\usepackage[utf8]{inputenc} 
\usepackage[T1]{fontenc}    
\usepackage{hyperref}       
\usepackage{url}            
\usepackage{booktabs}       
\usepackage{amsfonts}       
\usepackage{nicefrac}       
\usepackage{microtype}      
\usepackage{xcolor}         

\usepackage{amsthm, bm, amsmath}
\usepackage{tabularx}
\usepackage{adjustbox}
\usepackage{mathtools}
\mathtoolsset{showonlyrefs=true}
\newtheoremstyle{propstyle} 
    {2mm}                    
    {1mm}                    
    {\itshape}                   
    {}                           
    {\scshape}                   
    {.}                          
    {.5em}                       
    {}  
\theoremstyle{propstyle}
\newtheorem{theorem}{Theorem}
\theoremstyle{propstyle}

\theoremstyle{propstyle}
\newtheorem{proposition}{Proposition}
\theoremstyle{propstyle}

\theoremstyle{propstyle}

\theoremstyle{propstyle}

\theoremstyle{propstyle}

\newcommand{\bv}{\mathbf{v}}

\newcommand{\bx}{\mathbf{x}}
\newcommand{\by}{\mathbf{y}}

\newcommand{\bg}{\mathbf{g}}

\newcommand{\bz}{\mathbf{z}}

\newcommand{\bP}{\mathbf{P}}
\newcommand{\bA}{\mathbf{A}}

\newcommand{\bI}{\mathbf{I}}

\newcommand{\bV}{\mathbf{V}}

\newcommand{\bX}{\mathbf{X}}

\newcommand{\bfzero}{\mathbf{0}}

\newcommand{\bfmu}{\bm{\mu}}

\newcommand{\bfbeta}{\bm{\beta}}

\newcommand{\bfSigma}{\bm{\Sigma}}

\newcommand{\normal}{\mathcal{N}}
\newcommand{\kernel}{\mathcal{K}}
\newcommand{\domain}{\mathcal{D}}
\newcommand{\DL}{\mathcal{W}}
\newcommand{\defeq}{\overset{\text{def}}{=}}

\title{Permutation-preserving Functions and Neural Vecchia Covariance Kernels}

\author{%
  Jian Cao\\
  Department of Mathematics\\
  University of Houston\\
  Houston, TX 77047 \\
  \texttt{jcao21@central.uh.edu}\\
  \And
  Nian Liu\\
  Department of Mathematics\\
  University of Houston\\
  Houston, TX 77047 \\
  \texttt{nliu9@central.uh.edu}\\
  \AND
  Ying Lin\\
  Department of {Industrial and Systems Engineering}\\
  University of Houston\\
  Houston, TX 77047 \\
  \texttt{ylin53@central.uh.edu}\\
}

\begin{document}

\maketitle

\begin{abstract}
We introduce a novel framework for constructing scalable and flexible covariance kernels for Gaussian processes (GPs) by directly learning the covariance structure under a regression-type parameterization induced by Vecchia approximations, using deep neural architectures. Specifically, we model kriging coefficients and conditional standard deviations, deterministic quantities that uniquely characterize the covariance, providing stable and informative learning targets. Exploiting the permutation-equivariant structure of conditioning sets in the Vecchia factorization, we derive a universal representation for permutation-preserving functions and design neural architectures that respect this symmetry, leading to improved training stability and data efficiency. The proposed approach enables expressive, non-stationary kernel learning while maintaining computational scalability, thereby bridging classical GP methodology with modern deep learning.

\end{abstract}

\section{Introduction}

We study supervised learning problems where each response $y_{i}$ is associated with an input vector $\bx_{i} \in \mathbb{R}^d$. Gaussian processes (GPs) provide a principled and flexible framework for such settings by modeling functions through a covariance (kernel) function, which must be positive definite to define a valid joint distribution. A rich family of kernels has been developed, including classical stationary forms such as the squared exponential and Matérn kernels \citep{RasmussenWilliams}, as well as more expressive constructions like spectral mixture kernels \citep{WilsonAdams2013}.

Despite this progress, kernel selection remains a fundamental bottleneck. Real-world data often exhibit non-stationarity, heterogeneity, and high-dimensional interactions that are difficult to capture with standard parametric kernels. While non-stationary extensions, such as input warping \citep{snelson2003warped}, spatial deformations \citep{SampsonGuttorp1992}, and locally adaptive kernels \citep{PaciorekSchervish2006}, improve flexibility, it remains challenging to assess whether the assumed forms of non-stationarity adequately align with the underlying structure of the data.

Scalability poses a second, equally important challenge. Exact GP inference scales as $\mathcal{O}(N^3)$, where $N$ is the number of responses, making it impractical for large datasets. A large body of work has therefore focused on scalable approximations, including inducing point methods \citep{SnelsonGhahramani2006, Titsias2009}, kernel interpolation \citep{WilsonNickisch2015}, and sparse precision approaches \citep{Lindgren2011a}. Among these, the Vecchia approximation \citep{Vecchia1988} stands out for its accuracy and computation efficiency: it factorizes the $N$-dimensional joint density into a product of $N$ one-dimensional conditional distributions,
\[
f(\by) = \prod_{i=1}^N f(y_i \mid \by_{1:i-1}) \approx \prod_{i=1}^N f(y_i \mid \by_{c(i)}), \mbox{ where } c(i) \subset \{1, \ldots, i - 1\} \mbox{ and } |c(i)| \le m \ll N.
\]

This factorization suggests a natural route toward combining GPs with deep learning: one could model the conditional means and variances of $y_{i} | \by_{c(i)}$ using neural networks. However, this approach can be suboptimal for two reasons. First, given the covariance structure, the conditional means are still random since they depend on the random response vector $\by$, making them unstable learning targets. Second, conditional means alone do not adequately determine the covariance structure, preventing full recovery of the underlying dependency.

In this paper, we take a different route. Instead of modeling conditional means, we propose to learn the \textbf{kriging coefficients} together with the \textbf{conditional standard deviations}. These quantities are deterministic functions of the covariance structure, encode the same information as the covariance matrix, and provide a more stable and learnable parameterization. This shift turns the problem from learning random functionals to learning intrinsic feature of the covariance.

Our approach further exploits a key symmetry: the mapping from inputs to kriging coefficients is permutation-equivariant with respect to the conditioning set. Building on this observation, we derive a universal representation for \textbf{permutation-preserving functions}, analogous to Deep Sets \citep{zaheer2017deep}, and develop neural architectures that respect this structure. This leads to improved data efficiency and more stable training.

Our work connects to and differs from several lines of research on deep kernel learning. Prior approaches use neural networks to estimate kernel parameters \citep{sainsbury-dale_likelihood-free_2024, lenzi_neural_2023}, to transform inputs before applying parametric kernels \citep{Wilson2016DeepKernel, Wilson2016Stochastic, zhu2025scalable}, or to directly predict responses using augmented features \citep{Chen2021DeepKriging}. In contrast, we directly learn kriging coefficients as the fundamental objects governing the covariance. To our knowledge, this is the first work to bring kriging-coefficient-based kernel representations into deep learning for scalable GP modeling.

The remainder of the paper is organized as follows. Section~\ref{sec:cov_parameterization} introduces our covariance parameterization based on kriging coefficients and conditional standard deviations under the Vecchia approximation. Section~\ref{sec:neuvec} presents the proposed neural Vecchia architecture and develops the associated theoretical results for permutation-preserving functions. Section~\ref{sec:sim_study} evaluates the proposed method against three established covariance kernels using simulated datasets with varying covariance structures. Section~\ref{sec:application} applies the same set of methods to the Argo dataset, a large-scale and highly nonstationary oceanographic dataset. Finally, Section~\ref{sec:discuss} concludes with a discussion of the results and directions for future research.

\section{Covariance Reparameterization}
\label{sec:cov_parameterization}

Gaussian processes (GPs) are commonly used to model spatial or more general input-output relationships by specifying a mean function $\mu(\cdot)$ and a covariance kernel $\kernel(\cdot, \cdot)$ over a domain $\domain \subset \mathbb{R}^d$. Given a collection of input `locations' $\{\bx_i\}_{i=1}^{N} \subset \domain$, the associated responses $\by = (y_{1}, \ldots, y_{N})^\top$ follow a multivariate normal distribution, $\by \sim \normal(\bfmu, \bfSigma)$,
where $\bfmu \in \mathbb{R}^N$ with entries $\mu_i = \mu(\bx_i)$, and $\bfSigma \in \mathbb{R}^{N \times N}$ with entries $\bfSigma_{i,j} = \kernel(\bx_i, \bx_j)$. The covariance matrix $\bfSigma$ fully characterizes the dependence structure between entries of $\by$.

\subsection{Vecchia Approximation}

A major computational challenge in GP modeling is that evaluating the multivariate normal likelihood scales cubically in $N$. The Vecchia approximation \citep{Vecchia1988} addresses this issue by factorizing the joint density $f(\by)$ into 
$f(\by) \approx \tilde{f}(\by) = \prod_{i=1}^{N} f(y_i \mid \by_{c(i)})$,
where each conditioning set $c(i) \subset \{1, \ldots, i-1\}$ has cardinality at most $m$, a tuning parameter controlling the approximation accuracy. In practice, relatively small values of $m$ (e.g., $30 \le m \le 50$) are often sufficient for accurate approximation even when $N$ is large \citep{Datta2016, Katzfuss2017a, cao2025linear}.

The Vecchia approximation corresponds to another multivariate normal distribution $\normal(\bfmu, \tilde{\bfSigma})$, where the mean vector is unchanged and $\tilde{\bfSigma}$ is an approximation to the original covariance matrix $\bfSigma$. A key property of this approximation is that the inverse covariance admits a sparse Cholesky factorization. Specifically, there exists a sparse upper triangular matrix $\tilde{\bV}$ such that
$\tilde{\bfSigma} = (\tilde{\bV} \tilde{\bV}^\top)^{-1},$
where the nonzero entries in the $i$-th column of $\tilde{\bV}$ correspond to indices in $c(i) \cup \{i\}$ \citep{schafer2021sparse,Katzfuss2017a}. This sparsity reduces the computational complexity of likelihood evaluation and inference from $\mathcal{O}(N^3)$ to $\mathcal{O}(N m^3)$.

\subsection{Kriging-Coefficient Parameterization}

While the sparse inverse Cholesky factor $\tilde{\bV}$ provides a computationally efficient representation, it lacks direct interpretability. We therefore introduce an alternative covariance parameterization based on \textbf{kriging coefficients} and \textbf{conditional standard deviations}, which are more directly connected to likelihood computations.

Under the Vecchia approximation, each conditional distribution $f(y_i \mid \by_{c(i)})$ is univariate normal with mean and variance given by
\[
\mu_{i \mid c(i)} = \mu_i + \bfbeta_{i \mid c(i)}^\top (\by_{c(i)} - \bfmu_{c(i)})
\mbox{ and }
\sigma^2_{i \mid c(i)} \defeq \mbox{var}(y_i \mid \by_{c(i)}),
\]
where $\bfbeta_{i \mid c(i)} \in \mathbb{R}^{|c(i)|}$ is the kriging coefficients for predicting $y_i$ using its conditioning set. These quantities are directly related to the inverse Cholesky factor ( $\tilde{\bV}$ ) through
\begin{align}
\label{equ:new_vecc_parameterization_refined}
\bfbeta_{i | c(i)} = - \tilde{\bV}_{c(i), i} \tilde{\bV}_{i, i}^{-1}, \qquad 
\sigma_{i | c(i)} = \tilde{\bV}_{i, i}^{-1}.
\end{align}

\eqref{equ:new_vecc_parameterization_refined} establishes a one-to-one correspondence between the sparse inverse Cholesky factor and the collection of kriging coefficients and conditional standard deviations. In particular, the entire covariance structure (under the Vecchia approximation) encoded in $\tilde{\bfSigma}$ can be equivalently represented by the set $\{ \bfbeta_{i \mid c(i)}, \sigma_{i \mid c(i)} \}_{i=1}^N$. 
This representation offers two main advantages. First, it provides a more direct connection to the likelihood function: the kriging coefficients $\bfbeta_{i \mid c(i)}$ play a role analogous to regression coefficients in least-squares formulations, whereas working with the covariance matrix typically requires costly matrix factorizations to evaluate the log-likelihood. Second, the parameters are defined through low-dimensional conditional relationships, which naturally decompose across observations and make the model well suited for mini-batch optimization.

\section{Neural Vecchia approximations}
\label{sec:neuvec}

We propose a neural parameterization of Vecchia approximations by learning the kriging coefficients and conditional standard deviations using deep neural networks. These quantities uniquely determine the inverse Cholesky factor of the covariance matrix, and thus fully characterize the underlying covariance structure. Specifically, we introduce two neural architectures, denoted by $\DL^{\mu}$ and $\DL^{\sigma}$, to model the kriging coefficients $\hat{\bfbeta}_{i \mid c(i)}$ and the conditional standard deviation $\hat{\sigma}_{i \mid c(i)}$, respectively. Both networks take as input the spatial locations associated with the target index $i$ and its conditioning set $c(i)$. Formally, we define
\begin{align}
    &\DL^{\mu}: (\mathbb{R}^{m \times d}, \mathbb{R}^{1 \times d}) \rightarrow \mathbb{R}^{m} \mbox{ s.t. } \hat{\bfbeta}_{i|c(i)} = \DL^{\mu}(\bX_{c(i)}, \bX_{\{i\}}), \\ 
    &\DL^{\sigma}: (\mathbb{R}^{m \times d}, \mathbb{R}^{1 \times d}) \rightarrow \mathbb{R} \mbox{ s.t. } \hat{\sigma}_{i | c(i)} = \DL^{\sigma}(\bX_{c(i)}, \bX_{\{i\}}), 
\end{align}
where $\bX$ subscripted by an index set denotes the corresponding selection of rows from $\bX$. 

Importantly, the two mappings exhibit distinct symmetry structures with respect to their first arguments. The mapping $\DL^{\sigma}$ is permutation invariant, meaning that its output remains unchanged under any reordering of the conditioning locations. In contrast, $\DL^{\mu}$ is permutation preserving: permuting the conditioning locations induces the same permutation in the output kriging coefficients. Permutation invariance follows the standard definition used in \citet{zaheer2017deep}. The notion of permutation preservation is similarly natural and refers to equivariance between the ordering of the inputs and outputs; its formal definition is provided in Theorem~\ref{thm:perm_preserving}.

Meanwhile, the two-input formulation above can be rewritten as
\begin{equation}
\label{equ:DL_rewrite}
\DL\left([\bX_{\{j\}}, \; j \in c(i)], \bX_{\{i\}}\right) \rightarrow \DL\left(\big[[\bX_{\{j\}}, \bX_{\{i\}}], \; j \in c(i)\big]\right),
\end{equation}
where $\DL$ denotes either $\DL^{\sigma}$ or $\DL^{\mu}$, and $[\bX_{\{j\}}, \bX_{\{i}\}]$ represents the concatenation of the two vectors. Under this reformulation, the networks can be viewed as mappings
\begin{equation}
\label{equ:DL_rewrite_map}
\DL^{\mu}: (\mathbb{R}^{m \times 2d}) \rightarrow \mathbb{R}^{m} \qquad \DL^{\sigma}: (\mathbb{R}^{m \times 2d}) \rightarrow \mathbb{R},
\end{equation}
where $\DL^{\mu}$ and $\DL^{\sigma}$ are permutation-preserving and permutation-invariant, respectively, with respect to permutations of the rows of the input matrix. It is worth noting that the representation in \eqref{equ:DL_rewrite} is not unique. Any augmented representation containing equivalent information may be used. For example,
$[\bX_{\{j\}}, \bX_{\{i\}}] \Leftrightarrow [\mbox{dist}_{\{j\}, \{i\}}, \bv_{\{j\}, \{i\}}, \bX_{\{i\}}]$, where $\mbox{dist}_{\{j\}, \{i\}}$ and $\bv_{\{j\}, \{i\}}$ denote the Euclidean distance and the corresponding unit direction vector between $\bX_{\{j\}}$ and $\bX_{\{i\}}$, respectively. More generally, we denote the resulting augmented input domain by $\mathbb{R}^{m \times d_{A}}$, where $d_{A}$ is the dimension of the augmented feature representation. In the simplest case, $d_{A} = 2d$, although more structured feature representations may improve the training efficiency and expressive power of $\DL^{\mu}$ and $\DL^{\sigma}$.

To improve training efficiency and exploit the permutation-invariant structure of $\DL^{\sigma}$, we adopt the Deep Sets architecture of \citet{zaheer2017deep}. In particular, Theorem~9 of \citet{zaheer2017deep} shows that any continuous permutation-invariant function, including $\DL^{\sigma}$, can be approximated arbitrarily close by the form $\rho\big(\sum_{i = 1}^{m} \phi(\bX_{\{i\}})\big)$ for suitable transformations of $\phi$ and $\rho$. Moreover, when the input dimension is one-dimensional, this representation becomes exact. In this paper, we parameterize both $\phi$ and $\rho$ in $\DL^{\sigma}$ using fully connected neural networks. An illustration of the resulting architecture is provided in Figure~\ref{fig:W_sigma}.
\begin{figure}[htbp]
    \centering
    \includegraphics[width=0.9\textwidth]{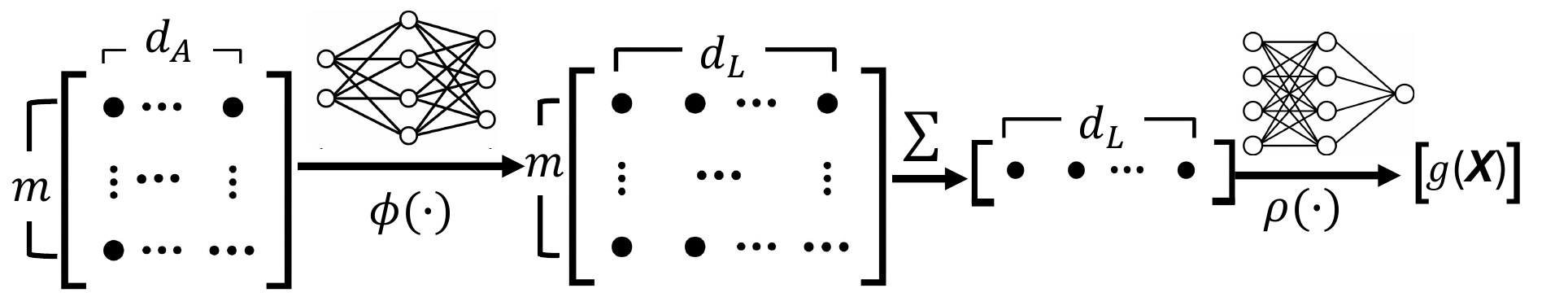}
    \caption{Illustration of the architecture of $\DL^{\sigma}$}
    \label{fig:W_sigma}
\end{figure}
Here, $d_{A}$ and $d_{L}$ denote the dimensions of the augmented input space and latent feature space, respectively, while $\sum$ represents summation over the rows of the input matrix (i.e., aggregation over the conditioning set). Throughout this paper, we focus on the basic augmentation strategy described in \eqref{equ:DL_rewrite}, under which $d_A = 2d$. The choice of latent dimension $d_L$ depends on the scale and complexity of the training data; concrete specifications are provided in Sections~\ref{sec:sim_study} and~\ref{sec:application}.

In this work, we extend Theorems 7 and 9 of \citet{zaheer2017deep} to vector-valued functions, which is essential for constructing a universal representation of permutation-preserving mappings in Theorem~\ref{thm:perm_preserving}. The resulting extension is summarized in Proposition~\ref{prp:perm_invariant}.
\begin{proposition}
    \label{prp:perm_invariant}
    $\bX \in \mathbb{R}^{m \times k_{1}}$ denotes a collection of $m$ input vectors from a compact subset of $\mathbb{R}^{k_{1}}$. Suppose $\bg: \mathbb{R}^{m \times k_{1}} \rightarrow \mathbb{R}^{k_{2}}$ is a continuous permutation-invariant function satisfying $\bg(\bP \bX) = \bg(\bX)$ for any permutation matrix $\bP$, it can be approximated arbitrarily close by $\rho\big(\sum_{i = 1}^{m} \phi(\bX_{\{i\}})\big)$, for suitable transformations $\phi$ and $\rho$. Specifically, when $k_{1} = 1$, $\bg$ exactly has the representation of $\rho\big(\sum_{i = 1}^{m} \phi(\bX_{\{i\}})\big)$. Here, $\rho$ is allowed to be vector-valued.
\end{proposition}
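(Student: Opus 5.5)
The plan is to reduce to the scalar case treated in \citet{zaheer2017deep} and exploit the fact that the sum decomposition is \emph{shared across output coordinates}: the inner map $\phi$ does not depend on which coordinate of $\bg$ we represent, so only $\rho$ needs to be vector-valued.

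\textbf{Exact case, $k_1 = 1$.} Here $\bX$ is a multiset of $m$ reals in a compact set. I would take the power-sum embedding $\phi(x) = (x, x^2, \ldots, x^m)^\top$, as in the proof of Theorem~7 of \citet{zaheer2017deep}. By Newton's identities, $E(\bX) = \sum_{i=1}^m \phi(\bX_{\{i\}})$ determines the elementary symmetric polynomials. These are the coefficients of $\prod_i (t - x_i)$, so they determine the multiset $\{x_i\}$. Hence $E$ is injective on the quotient of the input set by permutations.

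Because $\bg$ is permutation invariant, it is constant on fibers of $E$. So $\rho := \bg \circ E^{-1}$ is well defined on the image $E(\cdot)$, and it is automatically $\mathbb{R}^{k_2}$-valued because $\bg$ is. Continuity of $\rho$ follows because:
\begin{itemize}
\item the domain is compact and $E$ is continuous, so $E$ induces a continuous bijection from the compact quotient space onto its image;
\item that bijection is therefore a homeomorphism, so $E^{-1}$ is continuous on the image.
\end{itemize}
This gives $\bg = \rho\big(\sum_i \phi(\bX_{\{i\}})\big)$ exactly, with one $\phi$ serving all $k_2$ components simultaneously.

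\textbf{General $k_1$, approximation.} I would proceed coordinatewise in the output, but with a common $\phi$. Write $\bg = (g_1, \ldots, g_{k_2})$; each $g_j$ is a continuous, permutation-invariant scalar function on a compact set. By Stone--Weierstrass, each $g_j$ can be uniformly approximated within $\varepsilon/\sqrt{k_2}$ by a polynomial $p_j$, and symmetrizing (averaging over permutations) keeps $p_j$ symmetric without increasing the error. Let $D$ be the maximal degree of the $p_j$.

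Every symmetric polynomial in the rows of $\bX$ of degree at most $D$ is a polynomial in the multisymmetric power sums $\sum_i \prod_{\ell} (\bX_{i\ell})^{a_\ell}$ with $|a| \le D$. This is a classical result of Weyl/Noether type in characteristic zero. So I take $\phi$ to be the vector of all monomials $x^{a}$ with $1 \le |a| \le D$; this single $\phi$ serves every $j$. Then each $p_j = r_j\big(\sum_i \phi(\bX_{\{i\}})\big)$ for a polynomial $r_j$. Setting $\rho = (r_1, \ldots, r_{k_2})$ gives $\sup \|\bg - \rho(\sum_i \phi)\| \le \varepsilon$.

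Alternatively, one can apply Theorem~9 of \citet{zaheer2017deep} to each $g_j$, producing maps $\phi_j, \rho_j$. Then concatenate: $\phi = (\phi_1, \ldots, \phi_{k_2})$ and $\rho(z_1, \ldots, z_{k_2}) = (\rho_1(z_1), \ldots, \rho_{k_2}(z_{k_2}))$. The sum of a concatenation is the concatenation of the sums, so this also works directly.

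\textbf{Main obstacle.} The expected difficulty is the invariant-theory fact (multisymmetric power sums generate the symmetric polynomials) in the general-$k_1$ case, or equivalently the continuity of $E^{-1}$ in the exact case. For the former I would cite the classical theorem. If that is undesirable, I would instead use the concatenation argument, which reduces everything to the scalar statement already established in \citet{zaheer2017deep}. The cost is a larger latent dimension, which is harmless here.
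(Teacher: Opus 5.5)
Your proposal is correct and follows essentially the same route as the paper: for $k_1 = 1$, the power-sum embedding with $\rho = \bg \circ E^{-1}$, one shared $\phi$ and a vector-valued $\rho$; for general $k_1$, coordinatewise polynomial approximation with a common monomial feature map $\phi$ and polynomial entries of $\rho$. If anything, you make explicit several steps the paper only asserts: the continuity of $E^{-1}$ via the compact-to-Hausdorff argument, the symmetrization of the approximating polynomials, and the fact that multisymmetric polynomials are polynomials in the power sums $\sum_i \phi(\bX_{\{i\}})$, which is what the paper's remark about ``homogeneous symmetric monomials'' actually needs.
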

\noindent
The proof is provided in Appendix~\ref{app:proofs}. We now turn to $\DL^{\mu}$, which is permutation preserving rather than permutation invariant. Specifically, $\DL^{\mu}$ permutes its output according to the same permutation applied to its input. We show that this class of functions also admits a universal representation.
\begin{theorem}
    \label{thm:perm_preserving}
    $\bX \in \mathbb{R}^{m \times k_{1}}$ denotes a collection of $m$ input vectors from a compact subset of $\mathbb{R}^{k_{1}}$. Suppose $\bg: \mathbb{R}^{m \times k_{1}} \rightarrow \mathbb{R}^{k_{2}}$ is a continuous vector-valued permutation-preserving function satisfying $\bg(\bP \bX) = \bP \bg(\bX)$ for any permutation matrix $\bP \in \mathbb{R}^{m \times m}$, the $i$th component of $\bg$, denoted by $g_{i}$, can be approximated arbitrarily close by
    \begin{equation}
    \label{equ:perm_preserve}
    g_{i}(\bX) \approx \rho_{1}\left(\bX_{\{i\}}, \rho_{2}\left(\sum_{j \in \{1, \ldots, m\} \setminus \{i\}} \phi(\bX_{\{j\}})\right)\right),
    \end{equation}
    for suitable transformations $\phi$, $\rho_{1}$ and $\rho_{2}$. Moreover, when $k_{1} = 1$, the representation above is exact.
\end{theorem}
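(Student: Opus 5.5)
The plan is to reduce the statement to Proposition~\ref{prp:perm_invariant} applied to a single component. Note first that $\bP\bg(\bX)$ only makes sense if the output has $m$ entries, so we read $k_2 = m$.

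\textbf{Step 1: symmetry of a single component.} Fix $i$ and let $\bP$ be any permutation that fixes index $i$. Writing $\pi$ for the underlying permutation, the relation $\bg(\bP\bX)=\bP\bg(\bX)$ read in coordinate $i$ gives $g_i(\bP\bX)=g_{\pi(i)}(\bX)=g_i(\bX)$. In particular, permuting the rows $\{\bX_{\{j\}}: j\neq i\}$ among themselves leaves $g_i$ unchanged.

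\textbf{Step 2: reduce to one function.} Let $\bP_{1i}$ be the transposition swapping rows $1$ and $i$. Then $g_i(\bX)=g_1(\bP_{1i}\bX)$. We therefore define
\[
h(\bx,\bz_{1},\ldots,\bz_{m-1}) \defeq g_1([\bx;\bz_1;\ldots;\bz_{m-1}]),
\]
which is continuous and, by Step~1 with $i=1$, invariant under permutations of $(\bz_1,\ldots,\bz_{m-1})$. Moreover, the multiset $\{\bz\}$ obtained from $\bP_{1i}\bX$ after removing the first row is exactly $\{\bX_{\{j\}}: j\neq i\}$. It follows that $g_i(\bX)=h(\bX_{\{i\}}, \{\bX_{\{j\}}\}_{j\ne i})$ with the same $h$ for every $i$. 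This is why a single triple $(\phi,\rho_1,\rho_2)$ can serve all components.

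\textbf{Step 3: apply Proposition~\ref{prp:perm_invariant} with the first argument as a parameter.} This is the main obstacle. Here $h$ is a vector-free but parameter-dependent invariant function, whereas Proposition~\ref{prp:perm_invariant} treats invariant functions of the set alone. I would handle it as follows. In the exact case $k_1=1$, the Deep Sets construction (power sums $\phi(z)=(z,z^2,\ldots,z^{m-1})$) gives a continuous injective map $E(Z)=\sum_{j}\phi(z_j)$ from multisets in the compact domain onto its compact image, with a continuous inverse on that image. Set $\rho_2=\mathrm{id}$ and $\rho_1(x,s)=h(x,E^{-1}(s))$, which is continuous, so the representation is exact.

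For general $k_1$, I see two routes:
\begin{itemize}
\item Use Proposition~\ref{prp:perm_invariant} with vector-valued output. Apply it to the map $Z\mapsto h(\cdot,Z)\in C(K)$, where $K$ is the compact domain of the first argument. Concretely, take a finite $\varepsilon$-net $\{x_1,\ldots,x_L\}$ of $K$ and apply Proposition~\ref{prp:perm_invariant} to the vector-valued invariant function $Z\mapsto (h(x_1,Z),\ldots,h(x_L,Z))\in\mathbb{R}^L$. This yields $\rho_2(\sum_j\phi(\bz_j))$ that approximates all $L$ values.
\item Alternatively, use the approximation $\rho_2(\sum\phi)\approx E$ directly.
\end{itemize}
I would take the first route. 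Given it, define $\rho_1(x,\mathbf{u})$ as a continuous partition-of-unity interpolation $\sum_\ell w_\ell(x)u_\ell$ in $x$. Uniform continuity of $h$ on the compact set then bounds the error by $\varepsilon$ plus the modulus of continuity. Controlling this interpolation step, and keeping $\phi,\rho_2$ independent of $i$, is where care is needed. Step~2 guarantees that independence.
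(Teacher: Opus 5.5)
Your proposal is correct. Steps 1 and 2 match the paper: its ``symmetry against the hyperplane $\bX_{\{i\}}=\bX_{\{j\}}$'' is your identity $g_i(\bX)=g_1(\bP_{1i}\bX)$. Your $k_1=1$ argument is the paper's power-sum/$E^{-1}$ construction.

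The routes differ for general $k_1$. The paper writes $g_1(\bX)=\rho_1(\bX_{\{1\}},h(\bX_{\{2:m\}}))$ with $\rho_1=g_1$ and argues that the encoding $h$ is permutation-invariant. It then approximates $h$ by $\rho_2(\sum\phi)$ via Proposition~\ref{prp:perm_invariant} and pushes the error through the continuity of $\rho_1$. You never introduce an encoding of the context. Instead you apply the vector-valued Proposition~\ref{prp:perm_invariant} to the finitely many invariant maps $Z\mapsto g_1(x_\ell,Z)$ at the net points, and recover the dependence on the first argument with a partition of unity. This gives an explicit uniform bound $\varepsilon+\omega(\delta)$, where $\omega$ is the modulus of continuity of $g_1$ and $\delta$ the mesh of the net. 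The bound holds for all $i$ at once because $(\phi,\rho_1,\rho_2)$ is shared.

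What your route buys is rigor where the paper is loose. The paper's claim that $h$ ``must be'' invariant does not follow from $\rho_1(\bX_{\{1\}},h(\bP^{'}\bX_{\{2:m\}}))=\rho_1(\bX_{\{1\}},h(\bX_{\{2:m\}}))$: the identity map is an admissible, non-invariant $h$. Your Step 1 states the correct fact, namely that $g_1(x,\cdot)$ is itself invariant, so an invariant encoding can be \emph{chosen}. The paper also leaves implicit that $\rho_1$ must be extended beyond the range of $h$ so the perturbed argument $\rho_2(\sum\phi)$ stays in its domain. Your interpolant $\rho_1(x,\mathbf{u})$ is linear in $\mathbf{u}$ and defined everywhere, so that issue does not arise.

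The price is that the output dimension $L$ of $\rho_2$ grows as the tolerance shrinks. The paper's route, completed with a concrete injective continuous invariant encoding, avoids this. One such encoding is the multisymmetric power sums $\sum_j \bz_j^{\alpha}$ with $|\alpha|\le m-1$, which separate multisets in $\mathbb{R}^{k_1}$. With it the latent dimension no longer depends on the accuracy, and your own $E^{-1}$ argument then gives an exact representation for every $k_1$, not only $k_1=1$.
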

The proof of Theorem~\ref{thm:perm_preserving}, which builds upon Proposition~\ref{prp:perm_invariant}, is also provided in Appendix~\ref{app:proofs}. Intuitively, $\phi$ first maps each input into a latent representation, $\rho_{2}$ aggregates information from the remaining inputs, and $\rho_{1}$ models the interaction between the target input $\bX_{\{i\}}$ and the aggregated contextual information. Theorem~\ref{thm:perm_preserving} therefore provides a universal characterization of permutation-preserving functions through the three transformations $\phi$, $\rho_{1}$, and $\rho_{2}$, guiding the proposed neural architecture for learning kriging coefficients.

Finally, \citet{zaheer2017deep} conjectured that set functions defined on subsets of an uncountable universe with varying cardinalities can also be represented through the form $\rho\big(\sum_{\bx \in \bX} \phi(\bx)\big)$. Under this conjecture, both Proposition~\ref{prp:perm_invariant} and Theorem~\ref{thm:perm_preserving} naturally extend to varying conditioning sizes $m \in \mathbb{N}^{+}$. Consequently, the learned transformations $\phi$, $\rho_{1}$, and $\rho_{2}$ may generalize across different choices of $m$, potentially enabling covariance representations beyond the Vecchia framework.

Figure~\ref{fig:W_mu} illustrates the proposed universal architecture for permutation-preserving functions.
\begin{figure}[htbp]
    \centering
    \begin{adjustbox}{center}
    \includegraphics[width=1\textwidth]{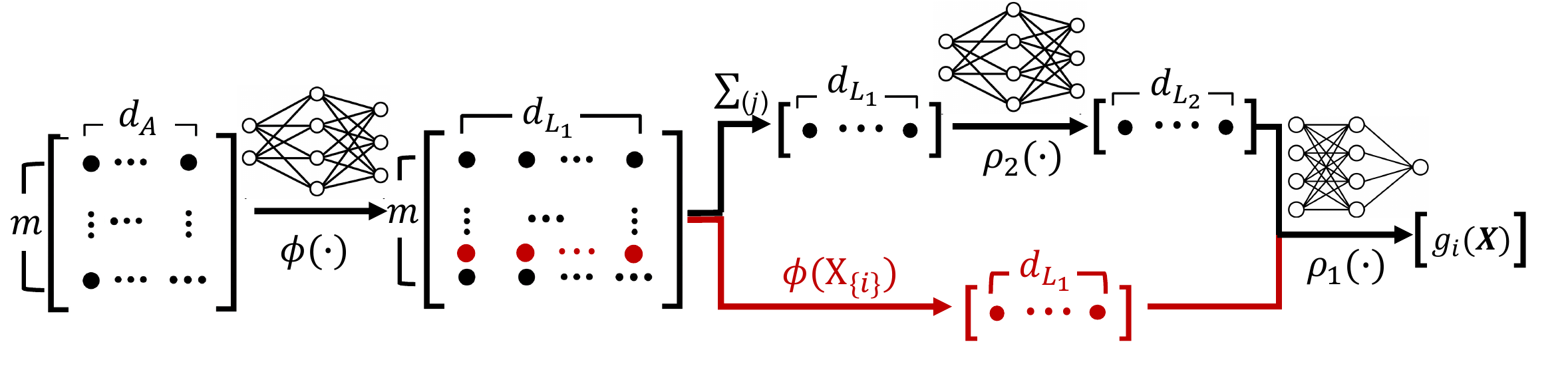}
    \end{adjustbox}
    \caption{Illustration of the architecture of $\DL^{\mu}$}
    \label{fig:W_mu}
\end{figure}
Here, $\sum_{(j)}$ denotes summation over all rows except the $j$th row, while $\phi(\bX_{\{i\}})$ extracts the latent representation corresponding to the $i$th input vector. The quantities $d_{L_{1}}$ and $d_{L_{2}}$ denote the dimensions of two latent spaces. Although these dimensions need not be equal in general, we set $d_{L_{1}} = d_{L_{2}}$ throughout this paper for simplicity and without architecture-specific fine-tuning. In Figure~\ref{fig:W_mu}, the inputs to $\rho_{1}$ are taken to be $\phi(\bX_{\{i\}})$ together with the output of $\rho_{2}$, rather than the original input $\bX_{\{i\}}$ as outlined in \eqref{equ:perm_preserve}. This design choice is motivated by the fact that $\phi(\bX_{\{i\}})$ lies in a higher-dimensional latent space, typically with $d_{L_{1}} > d_A$, thereby providing a richer and more expressive representation of the local input structure, which may enhance the flexibility and representational capacity of $\DL^{\mu}$.

The proposed neural Vecchia approximation also offers favorable computational scalability compared to classical Vecchia-type GP approximations. Both approaches scale linearly with respect to the number of training observations $N$. However, neural Vecchia additionally achieves linear complexity in the conditioning size $m$, whereas traditional Vecchia-type methods typically incur cubic complexity in $m$ due to repeated matrix factorizations and inversions. A similar cubic dependence on $m$ also appears in pseudo-input-based approaches \citep{SnelsonGhahramani2006}, where $m$ corresponds to the number of inducing or pseudo inputs. Consequently, neural Vecchia provides a computational advantage over existing scalable GP approximations, particularly when larger conditioning sets are required to capture complex dependence structures.

\section{Simulation Study}
\label{sec:sim_study}

\subsection{Benchmarks and scenarios}
\label{subsec:sim_scene}

In this section, we evaluate the proposed neural Vecchia kernel using simulated Vecchia-type Gaussian process (GP) realizations and compare it against three established covariance kernels across four covariance structures used for data simulation. For a given conditioning size $m$, we generate $N$ independent batches of $m+1$ pairs of input locations and responses
\begin{equation}
  \mbox{input location:}\; \bx_{i}^{(r)} \in \mathbb{R}^{d}, \;\; \mbox{response:}\; y_{i}^{(r)}, \;\; i = 1, \ldots, m+1,\; r = 1, \ldots, N   
\end{equation}
with $d=3$. For each $r$, $\{\bx_{i}^{(r)}\}_{i = 1}^{m+1}$ is drawn using Latin hypercube sampling and $\{y_{i}^{(r)}\}_{i = 1}^{m+1}$ is one realization of a zero-mean GP over $\{\bx_{i}^{(r)}\}_{i = 1}^{m+1}$ under a specified covariance kernel. The goal is to predict $y^{(r)}_{m+1}$ using $\{\bx_{i}^{(r)}\}_{i = 1}^{m+1}$ and $\{y_{i}^{(r)}\}_{i = 1}^{m}$ for each $r$, for which we use the negative log-likelihood (NLL) as our objective function. These simulated batches are used to train both the proposed kernel and the benchmark kernels. Performance is evaluated on an independent test set consisting of $N^*$ batches generated in the same manner.

As benchmarks, we consider three covariance kernels: the Matérn kernel with smoothness $1.5$ and automatic relevance determination (MT15), the spectral mixture kernel (SM) \citep{WilsonAdams2013}, and the deep kernel of \citet{Wilson2016DeepKernel} (DTSM), which combines neural network-based input transformation with the SM kernel. We refer to our proposed method as \textbf{NeuVec}. The SM and DTSM models follow the specifications in Table~1 of \citet{Wilson2016DeepKernel}: SM uses six spectral components, while DTSM employs a fully connected neural network with architecture [$d$-1000-1000-500-50-2] to transform the inputs. For NeuVec, all components in $\DL^{\mu}$ and $\DL^{\sigma}$ are parameterized by fully connected neural networks, with architectures detailed in Table~\ref{tbl:architecture}.
\begin{table}[h]
  \caption{Architecture of NeuVec for simulation study}
  \label{tbl:architecture}
  \centering
  \begin{tabular}{l| l | l | l}
    \toprule
    $\phi$ of $\DL^{\mu}$ & [$2d$, 128, 128, 128, 64] & $\phi$ of $\DL^{\sigma}$ & [$2d$, 128, 128, 128, 64] \\
    \midrule
    $\rho_{1}$ of $\DL^{\mu}$ & [128, 128, 128, 128, 1] & $\rho$ of $\DL^{\sigma}$ & [64, 128, 128, 128, 1]\\
    \midrule
    $\rho_{2}$ of $\DL^{\mu}$ & [64, 128, 128, 128, 64] \\
    \bottomrule
  \end{tabular}
\end{table}

For data generation, we consider a set of covariance kernels with varying levels of complexity. These include the Matérn kernel with smoothness 1.5 and automatic relevance determination (MT15), a lengthscale-nonstationary kernel (RangeNS) introduced in \citet{Cao2024}, a Periodic kernel, and a domain-transformed Matérn kernel (DTMT15); see Table~\ref{tbl:sim_kernels} for their parameterizations. 
\begin{table}[h]
\centering
\caption{Covariance kernel parameterizations used in data simulation}
\label{tbl:sim_kernels}
\small
\setlength{\tabcolsep}{4pt}
\begin{tabularx}{\textwidth}{
    >{\raggedright\arraybackslash}p{0.18\textwidth}
    >{\raggedright\arraybackslash}X
    >{\raggedright\arraybackslash}p{0.28\textwidth}
}
\toprule
Kernel & $\kernel(\bx_{i},\bx_{j})$ & Parameters \\
\midrule

MT15
& $\sigma^{2} M_{\nu}\!\left(\left\|\tfrac{\bx_{i} - \bx_{j}}{\boldsymbol{\ell}}\right\|\right) + \tau^{2} \delta_{ij}$ 
& $\sigma=1.0,\; \ell_{q}=0.3$ \\
& & $\nu=1.5,\; \tau^{2}=0.01$\\

\midrule

RangeNS
& $\sigma^{2} c_{ij} \exp(-h_{ij}) + \tau^{2} \delta_{ij}$\\
& $c_{ij} = \left( \frac{\ell(\bx_{i})^{1/4}\,\ell(\bx_{j})^{1/4}}{\left(\tfrac{\ell(\bx_{i})+\ell(\bx_{j})}{2}\right)^{1/2}} \right)^{d}$ &
$\beta_{0}=-2.0,\; \beta_{1}=1.0,$ \\
& $h_{ij} = \frac{\|\bx_{i} - \bx_{j}\|}{\left(\tfrac{\ell(\bx_{i})+\ell(\bx_{j})}{2}\right)^{1/2}}$ 
& $\beta_{2}=-1.0$ \\
& $\ell(\bx) = \exp\!\big(\beta_{0} + \beta_{1} \sin(3\pi s) + \beta_{2} \cos(2\pi s)\big)$ 
& $\sigma=1.0,\; \tau^{2}=0.01$  \\
& $s=\mathbf{1}^{\top} \bx$ 
\\

\midrule

Periodic 
& $\sigma^{2} \exp\!\left(-\frac{2}{\ell} \sum_{q} \sin^{2}\!\left(\frac{\pi (x_{i q}-x_{j q})}{p}\right)\right) + \tau^{2} \delta_{ij}$ 
& $\sigma=1.0,\; \ell=d$ \\
& & $p=0.5,\; \tau^{2}=0.01$ \\

\midrule

DTMT15
& $\sigma^{2} M_{\nu}\!\left(\|\bA\bx_{i} - \bA\bx_{j}\|\right) + \tau^{2} \delta_{ij}$ 
& $\sigma=1.0,\; \ell=\sqrt{d}/10$ \\
& $\bA$ is a $d \times d$ magic square
& $\nu=1.5,\; \tau^{2}=0.01$ \\

\bottomrule
\end{tabularx}
\end{table}
In Table~\ref{tbl:sim_kernels}, $\delta_{ij}$ denotes the Kronecker delta function, $M_{\nu}(\cdot)$ denotes the Mat\'ern correlation function with smoothness $\nu$, and $\boldsymbol{\ell} = (\ell_{1},\ldots,\ell_{d})$ represents lengthscales, with $\ell_{q}$ controlling the scale of the $q$th input dimension. The MT15 kernel serves as a stationary baseline, while RangeNS introduces input-dependent lengthscales to capture nonstationarity. The Periodic kernel models repeating patterns and DTMT15 adds geometric flexibility through domain transformation. Together, these kernels span a diverse range of covariance structures for our simulation study.

\subsection{Numerical results}

We train all models using mini-batch optimization with batch size $N^{\text{\scriptsize batch}} = 2{,}048$ for $N^{\text{\scriptsize iter}} = 60{,}000$ iterations, resulting in a total of $N = N^{\text{\scriptsize batch}} \times N^{\text{\scriptsize iter}}$ training samples. The test set size is set to $N^* = 10 N^{\text{\scriptsize batch}}$, which, although much smaller than $N$, is sufficient to provide stable estimates of the average loss for Vecchia-type GP realizations. Optimization is performed using the AdamW algorithm implemented in the PyTorch framework \citep{paszke2019pytorch} with an initial learning rate of $10^{-3}$, weight decay of $10^{-4}$, and a scheduler that reduces the learning rate to $10^{-5}$ by the end of training. As training data can be generated on demand, no regularization such as dropout is applied.

We first compare the proposed NeuVec with the three benchmark kernels at $m=30$. For reference, we also report the performance obtained when the true covariance kernel (i.e., the one used for data simulation) is used within a Vecchia GP. Model performance is evaluated using mean squared error (MSE) and negative log-likelihood (NLL), with results summarized in Table~\ref{tbl:sim_m30}.
\begin{table}[h]
\small
\caption{Performance of four covariance kernels (columns) under four simulation scenarios (rows) at $m = 30$. The column `True' reports the statistics when the true kernel was used within a Vecchia GP.}
\label{tbl:sim_m30}
\centering
\vspace{-10pt}
\begin{adjustbox}{center}
\begin{tabular}[t]{lllllrllllr}
\toprule
\multicolumn{1}{c}{ } & \multicolumn{5}{c}{MSE} & \multicolumn{5}{c}{NLL} \\
\cmidrule(l{3pt}r{3pt}){2-6} \cmidrule(l{3pt}r{3pt}){7-11}
Kernel & MT15 & SM & DTSM & NeuVec & True & MT15 & SM & DTSM & NeuVec & True\\
\midrule
MT15 & \textbf{0.391} & 0.401 & 0.503 & 0.408 & 0.401 & \textbf{0.877} & 0.894 & 1.046 & 0.971 & 0.885\\
RangeNS & 0.675 & 0.687 & 0.690 & \textbf{0.609} & 0.608 & 1.213 & 1.225 & 1.215 & \textbf{1.171} & 1.132\\
Periodic & 0.647 & 0.674 & 0.583 & \textbf{0.284} & 0.134 & 1.164 & 1.180 & 1.136 & \textbf{0.789} & 0.322\\
DTMT15 & 0.427 & 0.440 & 0.410 & \textbf{0.335} & 0.331 & 0.934 & 0.953 & 0.921 & \textbf{0.872} & 0.768\\
\bottomrule
\end{tabular}
\end{adjustbox}
\end{table}
The proposed NeuVec kernel achieves the lowest MSE and NLL across all scenarios except MT15. As expected, the MT15 kernel performs best when the data are generated from the same MT15 structure, with the SM kernel slightly outperforming NeuVec in this setting. This is unsurprising as MT15 corresponds to the simplest scenario, stationary and isotropic, among the four. In contrast, under data generated by more complex covariance models, NeuVec substantially outperforms all benchmark kernels. Moreover, with the exception of the Periodic case, its performance is close to that obtained using the true covariance kernel, which is typically unavailable in practice.

We further examine how performance varies with the conditioning size $m$, with the expectation that larger $m$ leads to improved accuracy. Specifically, we vary $m$ from $10$ to $90$ in increments of $20$, keeping all other settings the same as Table~\ref{tbl:sim_m30}. All experiments are conducted on a Tesla P100-PCIE-16GB GPU with a maximum training time of 8 hours per configuration (defined by the kernel for simulation, the kernel for training, and $m$). Under this constraint, the DTSM kernel does not complete the full $60{,}000$ training iterations when $m \ge 50$. The resulting MSE trends are shown in Figure~\ref{fig:sim_MSE}, while the corresponding NLL results, which share similar patterns as MSE, are reported in Figure~\ref{fig:sim_NLL} in the Appendix.
\begin{figure}[t]
  \centering
  \begin{adjustbox}{center}
  \includegraphics[width=1.0\textwidth]{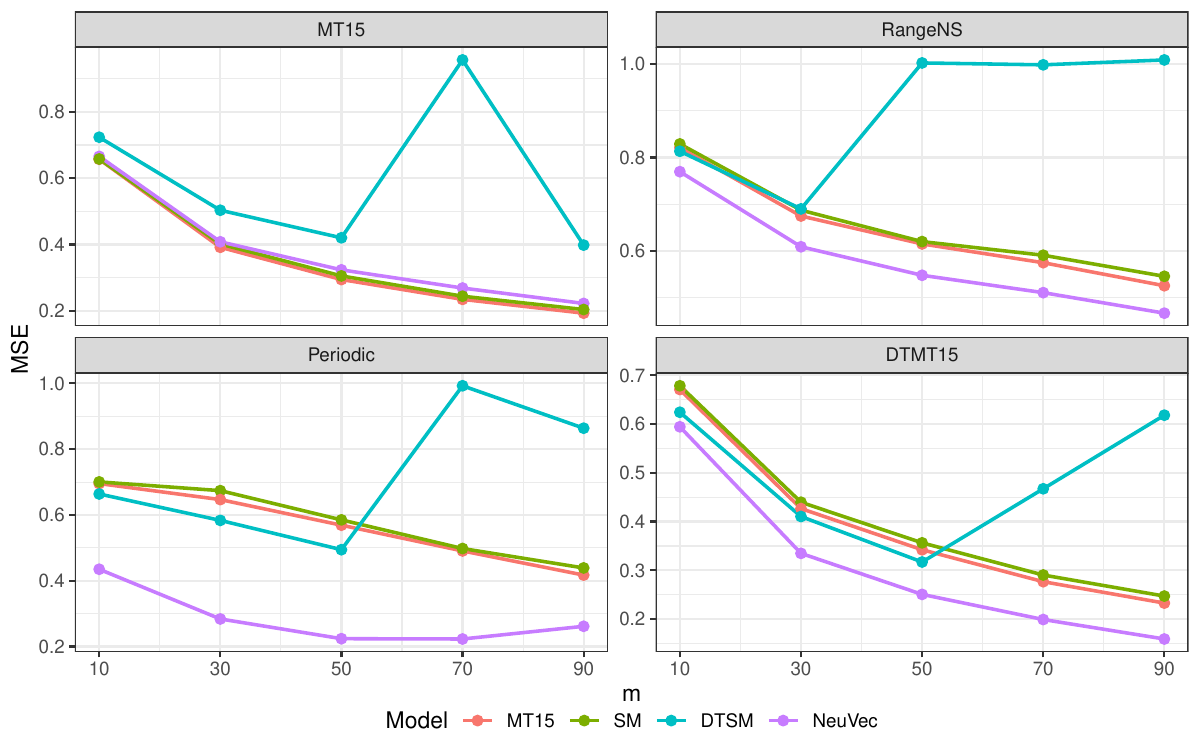}
  \end{adjustbox}
  \caption{Predictive MSE of using the proposed NeuVec kernel and the benchmarks. Each panel corresponds to one simulation scenario.}
  \label{fig:sim_MSE}
\end{figure}
NeuVec maintains a clear advantage over the benchmark methods under the RangeNS, Periodic, and DTMT15 scenarios, while remaining competitive with MT15 and SM in the simpler MT15 setting. Overall, its prediction error decreases consistently as $m$ increases, indicating that NeuVec effectively leverages larger conditioning sets and remains robust to covariance misspecification. DTSM exhibits degraded performance at $m = 70$ and $m = 90$, likely due to early termination of training. In terms of computational cost, the time per 1,000 iterations at $m = 90$ is approximately 30, 215, 10,000, and 69 seconds for MT15, SM, DTSM, and NeuVec, respectively.

\section{Application Study}
\label{sec:application}
We evaluate the proposed method using temperature data from the Argo program \citep{argo2000}, a global array of profiling floats that measure oceanographic variables such as temperature and salinity throughout the upper ocean. We focus on observations collected between February 2010 and 2026 within the depth range of 0-500 meters and a spatial region defined by longitude $[-75^{\circ}, -45^{\circ}]$ and latitude $[30^{\circ}, 45^{\circ}]$. This region corresponds to the Gulf Stream, a dynamically complex and highly variable ocean current, making it a challenging testbed for covariance modeling due to its strong nonstationarity and heterogeneous spatial structure.

We treat temperature as the response variable and model the temperature field across different years as independent realizations from a common Gaussian process. To balance the data across floats, we perform subsampling within each year: if a float records more than 200 temperature measurements, we randomly select 200 observations from that float. The input variables consist of longitude, latitude, depth, and day of the year. After subsampling, all input variables are normalized to the range $[0,1]$ based on their values across the 17-year period (Feb only), while the response variable (temperature) is left unnormalized. This preprocessing yields a total of $184{,}706$ input-response pairs.

For each year, the data are randomly split into 80\% for training and 20\% for testing. Conditioning sets are constructed using the $m=30$ nearest neighbors in the scaled input space, restricted to observations from the same year and excluding measurements from the same float, following standard practice in the Argo literature (e.g., \citet{KuuselaStein18, BaughMcKinnon2022}). To account for differing correlation scales across the four input dimensions, we first fit a Vecchia GP with an MT15 kernel to obtain initial estimates of the lengthscales. These estimates are then used to rescale the input space for nearest-neighbor selection. Compared to the simulation study in Section~\ref{sec:sim_study}, the available training data are more limited, and we therefore adopt a reduced NeuVec architecture, described in Table~\ref{tbl:architecture_argo}, to mitigate overfitting.
\begin{table}[h]
  \caption{Architecture of NeuVec for Argo data}
  \label{tbl:architecture_argo}
  \centering
  \begin{tabular}{l| l | l | l}
    \toprule
    $\phi$ of $\DL^{\mu}$ & [$2d$, 16, 16, 16, 16] & $\phi$ of $\DL^{\sigma}$ & [$2d$, 16, 16, 16, 16] \\
    \midrule
    $\rho_{1}$ of $\DL^{\mu}$ & [32, 16, 16, 16, 1] & $\rho$ of $\DL^{\sigma}$ & [16, 16, 16, 16, 1]\\
    \midrule
    $\rho_{2}$ of $\DL^{\mu}$ & [16, 16, 16, 16, 16] \\
    \bottomrule
  \end{tabular}
\end{table}
We adopt the same configurations for MT15, SM, and DTSM as in the simulation study. For MT15 and SM, the number of parameters is relatively small compared to the training sample size, almost eliminating the risk of overfitting. Although DTSM involves a large number of parameters, its deep-learning component outputs only a two-dimensional representation that is subsequently passed to the SM kernel, providing an implicit form of regularization. Moreover, the original work of \citet{zaheer2017deep} employs a similar DTSM architecture for datasets of size over $6{,}000$, supporting its use in our setting.

For the Argo dataset, we incorporate a shared mean function across all methods, modeled by a simple fully connected neural network with architecture [$d$, 8, 8, 8, 1], to facilitate extrapolation across floats. To further mitigate overfitting, we apply dropout with rate 0.3 to both NeuVec and the mean functions of all methods. Training is conducted using the same batch size ($2{,}048$) and number of iterations ($60{,}000$) as in Section~\ref{sec:sim_study}, and all models complete optimization within the 8-hour time budget. Table~\ref{tbl:argo} summarizes the predictive performance of the four methods in terms of MSE and NLL.
\begin{table}[h]
\caption{Prediction accuracy for Argo dataset.}
\label{tbl:argo}
\centering
\vspace{-10pt}
\begin{tabular}[t]{llllllll}
\toprule
\multicolumn{4}{c}{MSE} & \multicolumn{4}{c}{NLL} \\
\cmidrule(l{3pt}r{3pt}){1-4} \cmidrule(l{3pt}r{3pt}){5-8}
MT15 & SM & DTSM & NeuVec & MT15 & SM & DTSM & NeuVec\\
\midrule
4.813 & 4.915 & 5.132 & \textbf{3.546} & 2.195 & 2.208 & \textbf{1.315} & 2.052\\
\bottomrule
\end{tabular}
\end{table}
NeuVec achieves substantially lower MSE than all three benchmark methods, highlighting the advantage of its flexible, nonparametric covariance representation for predictive accuracy. In contrast, DTSM attains the best NLL, indicating stronger performance in uncertainty quantification. This advantage likely stems from DTSM’s balance between expressiveness and implicit regularization, whereas the fully nonparametric nature of NeuVec may require larger datasets to fully calibrate predictive uncertainty.

\section{Conclusion and Discussion}
\label{sec:discuss}

In this paper, we propose a neural Vecchia Gaussian process framework, termed NeuVec, which parameterizes covariance structures under the Vecchia approximation using deep neural networks. The proposed representation offers several important advantages. First, it fully leverages the expressive power of deep learning architectures by directly learning the covariance structure itself, rather than learning parameters of a predefined covariance family or coefficients of basis expansions. This is possible because the inverse Cholesky factor induced by the Vecchia representation is unconstrained, in contrast to covariance or precision matrices, which must satisfy positive-definiteness constraints. Second, the kriging-coefficient and conditional-standard-deviation parameterization leads to a streamlined optimization procedure involving only regression-type quantities, thereby avoiding repeated matrix inversions or factorizations during training. Third, the learning targets are intrinsic properties of the covariance structure and do not depend on randomness in the GP realizations. Consequently, the proposed approach provides more stable and efficient learning than directly modeling conditional means.

We further develop a universal representation theorem for permutation-preserving functions, which includes the mapping from conditioning locations to kriging coefficients as a special case. This result guides the proposed neural architecture and may also be useful more broadly in the design of equivariant learning systems. Through simulation studies, we demonstrate that NeuVec substantially outperforms existing covariance parameterizations under complex nonstationary covariance structures while remaining competitive under simpler stationary settings. We also evaluate NeuVec on the Argo temperature dataset over the Gulf Stream region, where it achieves superior predictive accuracy relative to benchmark methods.

The proposed NeuVec framework also has its tradeoffs. First, as with all Vecchia-type approximations, the induced covariance structure depends on the ordering of observations and the choice of conditioning sets. Consequently, changing the ordering or conditioning structure while keeping the learned NeuVec representation fixed may lead to slightly different covariance approximations. In practice, however, this sensitivity is expected to diminish as the training dataset becomes larger and the learned representation becomes more stable. Second, because NeuVec is highly flexible and largely nonparametric, it imposes weaker structural regularization than conventional parametric or semi-parametric kernels. As a result, larger datasets may be required to fully realize its representational capacity while avoiding overfitting. For smaller datasets, overfitting can be mitigated through architectural simplification, dropout, or other regularization techniques.

An interesting direction for future work is to combine NeuVec with structured covariance priors or parsimonious kernels to be more robust against overfitting. For example, one may augment training using realizations simulated from a simpler covariance model, such as a Mat\'ern kernel, thereby introducing an inductive bias toward stable covariance structures. Additional directions include developing principled regularization strategies for NeuVec and extending training to varying conditioning sizes $m$. The source code for replicating the results in this paper is available on Github.

\bibliographystyle{apalike}
\bibliography{refs}

@article{Vecchia1988,
    title = {{Estimation and model identification for continuous spatial processes}},
    year = {1988},
    journal = {Journal of the Royal Statistical Society, Series B},
    author = {Vecchia, AV},
    number = {2},
    pages = {297--312},
    volume = {50},
    url = {http://www.jstor.org/stable/10.2307/2345768}
}

@article{Datta2016,
    title = {{Hierarchical nearest-neighbor {G}aussian process models for large geostatistical datasets}},
    year = {2016},
    journal = {Journal of the American Statistical Association},
    author = {Datta, Abhirup and Banerjee, Sudipto and Finley, Andrew O. and Gelfand, Alan E.},
    number = {514},
    pages = {800--812},
    volume = {111},
    url = {http://arxiv.org/abs/1406.7343},
    doi = {10.1080/01621459.2015.1044091},
    issn = {0162-1459},
    arxivId = {arXiv:1406.7343v1}
}

@article{Katzfuss2017a,
    title = {{A general framework for {V}ecchia approximations of {G}aussian processes}},
    year = {2021},
    journal = {Statistical Science},
    author = {Katzfuss, Matthias and Guinness, Joseph},
    number = {1},
    pages = {124--141},
    volume = {36},
    url = {http://arxiv.org/abs/1708.06302},
    doi = {10.1214/19-STS755},
    arxivId = {1708.06302}
}

@article{cao2025linear,
  title={Linear-cost Vecchia approximation of multivariate normal probabilities},
  author={Cao, Jian and Katzfuss, Matthias},
  journal={Journal of the American Statistical Association},
  pages={1--14},
  year={2025},
  publisher={Taylor \& Francis}
}

@article{zaheer2017deep,
  title={Deep sets},
  author={Zaheer, Manzil and Kottur, Satwik and Ravanbakhsh, Siamak and Poczos, Barnabas and Salakhutdinov, Russ R and Smola, Alexander J},
  journal={Advances in neural information processing systems (NeurIPS)},
  volume={30},
  year={2017}
}

@book{RasmussenWilliams,
  author    = {Carl Edward Rasmussen and Christopher K. I. Williams},
  title     = {Gaussian Processes for Machine Learning},
  publisher = {MIT Press},
  year      = {2006}
}

@article{WilsonAdams2013,
  author  = {Andrew Gordon Wilson and Ryan Prescott Adams},
  title   = {Gaussian Process Kernels for Pattern Discovery and Extrapolation},
  journal = {Proceedings of the 30th International Conference on Machine Learning (ICML)},
  year    = {2013}
}

@article{SampsonGuttorp1992,
  author  = {Paul D. Sampson and Peter Guttorp},
  title   = {Nonparametric Estimation of Nonstationary Spatial Covariance Structure},
  journal = {Journal of the American Statistical Association},
  volume  = {87},
  number  = {417},
  pages   = {108--119},
  year    = {1992}
}

@article{PaciorekSchervish2006,
  author  = {Christopher J. Paciorek and Mark J. Schervish},
  title   = {Spatial Modelling Using a New Class of Nonstationary Covariance Functions},
  journal = {Environmetrics},
  volume  = {17},
  number  = {5},
  pages   = {483--506},
  year    = {2006}
}

@article{SnelsonGhahramani2006,
  author  = {Edward Snelson and Zoubin Ghahramani},
  title   = {Sparse Gaussian Processes Using Pseudo-inputs},
  journal = {Advances in Neural Information Processing Systems (NeurIPS)},
  volume={18},
  year    = {2006}
}

@article{Titsias2009,
  author  = {Michalis K. Titsias},
  title   = {Variational Learning of Inducing Variables in Sparse Gaussian Processes},
  journal = {Proceedings of the 12th International Conference on Artificial Intelligence and Statistics (AISTATS)},
  year    = {2009}
}

@article{WilsonNickisch2015,
  author  = {Andrew Gordon Wilson and Hannes Nickisch},
  title   = {Kernel Interpolation for Scalable Structured Gaussian Processes ({KISS-GP})},
  journal = {Proceedings of the 32nd International Conference on Machine Learning (ICML)},
  year    = {2015}
}

@inproceedings{Wilson2016DeepKernel,
  author    = {Andrew Gordon Wilson and Zhiting Hu and Ruslan Salakhutdinov and Eric P. Xing},
  title     = {Deep Kernel Learning},
  booktitle = {Proceedings of the 19th International Conference on Artificial Intelligence and Statistics (AISTATS)},
  year      = {2016}
}

@article{Wilson2016Stochastic,
  title={Stochastic variational deep kernel learning},
  author={Wilson, Andrew G and Hu, Zhiting and Salakhutdinov, Russ R and Xing, Eric P},
  journal={Advances in neural information processing systems (NeurIPS)},
  volume={29},
  year={2016}
}

@article{Chen2021DeepKriging,
  title={DeepKriging: Spatially Dependent Deep Neural Networks for Spatial Prediction},
  author={Chen, Wanfang and Li, Yuxiao and Reich, Brian J and Sun, Ying},
  journal={Statistica Sinica},
  volume={34},
  number={1},
  pages={291--311},
  year={2024},
  publisher={JSTOR}
}

@article{Lindgren2011a,
    title = {{An explicit link between Gaussian fields and Gaussian Markov random fields: the stochastic partial differential equation approach}},
    year = {2011},
    journal = {Journal of the Royal Statistical Society, Series B},
    author = {Lindgren, Finn and Rue, Håvard and Lindstr{\"{o}}m, J},
    number = {4},
    pages = {423--498},
    volume = {73}
}

@article{sainsbury-dale_likelihood-free_2024,
	title = {Likelihood-{Free} {Parameter} {Estimation} with {Neural} {Bayes} {Estimators}},
	volume = {78},
	issn = {0003-1305},
	url = {https://doi.org/10.1080/00031305.2023.2249522},
	doi = {10.1080/00031305.2023.2249522},
	number = {1},
	urldate = {2026-01-07},
	journal = {The American Statistician},
	author = {Sainsbury-Dale, Matthew and Zammit-Mangion, Andrew and Huser, Raphaël},
	month = jan,
	year = {2024},
	pages = {1--14}
}

@article{lenzi_neural_2023,
	title = {Neural networks for parameter estimation in intractable models},
	volume = {185},
	issn = {0167-9473},
	url = {https://www.sciencedirect.com/science/article/pii/S0167947323000737},
	doi = {10.1016/j.csda.2023.107762},
	urldate = {2026-01-06},
	journal = {Computational Statistics \& Data Analysis},
	author = {Lenzi, Amanda and Bessac, Julie and Rudi, Johann and Stein, Michael L.},
	month = sep,
	year = {2023},
	pages = {107762}
}

@Article{Cao2024,
author={Cao, Jian
and Zhang, Jingjie
and Sun, Zhuoer
and Katzfuss, Matthias},
title={Locally Anisotropic Nonstationary Covariance Functions on the Sphere},
journal={Journal of Agricultural, Biological and Environmental Statistics},
year={2024},
month={Jun},
day={01},
volume={29},
number={2},
pages={212-231},
issn={1537-2693},
doi={10.1007/s13253-023-00573-y},
url={https://doi.org/10.1007/s13253-023-00573-y}
}

@article{KuuselaStein18,
author = {Mikael Kuusela and Michael L. Stein},
    title = {Locally Stationary Spatio-Temporal Interpolation of Argo Profiling Float Data.},
    journal = {Proceedings of the Royal Society A: Mathematical, Physical and Engineering Sciences},
doi = {10.1098/rspa.2018.0400},
volume={474},
issue={2220},
    year = {2018}
}

@misc{argo2000,
  author = {Argo},
  title = {Argo float data and metadata from {Global Data Assembly Centre} ({Argo GDAC})},
  year = {2000},
  publisher = {SEANOE},
  doi = {10.17882/42182},
  note = {\url{https://doi.org/10.17882/42182}}
}

@article{BaughMcKinnon2022,
author = {Samuel Baugh and Karen McKinnon},
title = {{Hierarchical Bayesian modeling of ocean heat content and its uncertainty}},
volume = {16},
journal = {The Annals of Applied Statistics},
number = {4},
publisher = {Institute of Mathematical Statistics},
pages = {2603 -- 2625},
year = {2022},
doi = {10.1214/22-AOAS1605}
}

@article{zhu2025scalable,
  title={Scalable Gaussian Processes with Low-Rank Deep Kernel Decomposition},
  author={Zhu, Yunqin and Yuchi, Henry Shaowu and Xie, Yao},
  journal={arXiv preprint arXiv:2505.18526},
  year={2025}
}

@article{snelson2003warped,
  title={Warped gaussian processes},
  author={Snelson, Edward and Ghahramani, Zoubin and Rasmussen, Carl},
  journal={Advances in neural information processing systems (NeurIPS)},
  volume={16},
  year={2003}
}

@article{schafer2021sparse,
  title={Sparse Cholesky Factorization by {K}ullback--{L}eibler Minimization},
  author={Schäfer, Florian and Katzfuss, Matthias and Owhadi, Houman},
  journal={SIAM Journal on Scientific Computing},
  volume={43},
  number={3},
  pages={A2019--A2046},
  year={2021},
  publisher={SIAM}
}

@article{paszke2019pytorch,
  title={Pytorch: An imperative style, high-performance deep learning library},
  author={Paszke, Adam and Gross, Sam and Massa, Francisco and Lerer, Adam and Bradbury, James and Chanan, Gregory and Killeen, Trevor and Lin, Zeming and Gimelshein, Natalia and Antiga, Luca and others},
  journal={Advances in neural information processing systems},
  volume={32},
  year={2019}
}


\appendix

\section{Proofs}
\label{app:proofs}

\begin{proof}[Proof of Proposition~\ref{prp:perm_invariant}]
    The Theorem~7 of \citet{zaheer2017deep} corresponds to the special case of $k_{1} = k_{2} = 1$. We first consider the case where $k_{1} = 1$ and $k_{2} \ge 1$. Same as the Theorem~7 of \citet{zaheer2017deep}, define 
    \[
    \phi(\bX_{\{i\}}) = [1, \bX_{\{i\}}, \bX_{\{i\}}^{2}, \ldots, \bX_{\{i\}}^{m}].
    \]
    Define $E(\bX) = \sum_{i = 1}^{m} \phi(\bX_{\{i\}})$ and $\mathcal{Z}$ as the image of the domain of $\bg$ under the function $E$. Define $\mathcal{X} = \{(\bX_{\{1\}}, \bX_{\{2\}}, \ldots, \bX_{\{m\}}) \mid \bX_{\{1\}} \le \bX_{\{2\}} \le \cdots \le \bX_{\{m\}}\}$. $E: \mathcal{X} \rightarrow \mathcal{Z}$ is injective with a continuous inverse. Define $\rho(\bz) = \bg(E^{-1}(\bz))$, we can express $\bg(\bX)$ as $\rho\big(\sum_{i = 1}^{m} \phi(\bX_{\{i\}})\big)$.

    When $k_{1} \ge 1$ and $k_{2} \ge 1$, the proof of the Theorem~9 in \citet{zaheer2017deep} suggests that the entries of $\bg$ can all be approximated arbitrarily close by polynomials over the compact domain of $\bg$. For each $g_i$, due to permutation-invariance, its approximation polynomial can be written as a polynomial of homogeneous symmetric monomials and homogeneous symmetric monomials are the sum over monomial terms. Therefore, we can define $\phi(\bX_{\{i\}})$ as a collection of monomials up to a certain order and the entries of $\rho$ are polynomials corresponding to the approximation polynomials of the entries of $\bg$.
 \end{proof}

\begin{proof}[Proof of Theorem~\ref{thm:perm_preserving}]
    We first show that the vector-valued function $\tilde{\bg} = [\tilde{g}_1, \tilde{g}_2, \ldots, \tilde{g}_m]^{\top}$, where
    \[
    \tilde{g}_{i}(\bX) = \rho_{1}\left(\bX_{\{i\}}, \rho_{2}\left(\sum_{j \in \{1, \ldots, m\} \setminus \{i\}} \phi(\bX_{\{j\}})\right)\right),
    \]
    satisfies permutation-preserving property. Denote the permutation function defined by a permutation matrix $\bP \in \mathbb{R}^{m \times m}$ by $\pi: \{1, \ldots, m\} \rightarrow \{1, \ldots, m\}$ with $(i, \pi(i))$ for $i = 1, \ldots, m$ being the indices for the non-zero entries of $\bP$. Recall that $\bX_{\{i\}}$ denotes the $i$th row of $\bX$. Consider the $i$th entry of $\bP \tilde{\bg}(\bX)$:
    \[
    \tilde{g}_{\pi(i)}(\bX) = \rho_{1}\left(\bX_{\{\pi(i)\}}, \rho_{2}\left(\sum_{j \in \{1, \ldots, m\} \setminus \{\pi(i)\}} \phi(\bX_{\{j\}})\right)\right),
    \]
    and the $i$th entry of $\tilde{\bg}(\bP \bX)$
    \[
    \tilde{g}_{i}(\bP \bX) = \rho_{1}\left(\bX_{\{\pi(i)\}}, \rho_{2}\left(\sum_{j \in \{1, \ldots, m\} \setminus \{\pi(i)\}} \phi(\bX_{\{j\}})\right)\right).
    \]
    Hence, we show that $\tilde{\bg} = [\tilde{g}_1, \tilde{g}_2, \ldots, \tilde{g}_m]^{\top}$ is permutation-preserving.
    
    Next, we show that $g_{i}(\bX)$ can be approximated arbitrarily close by suitably chosen $\tilde{g}_{i}(\bX)$. We first show that $g_{i}$ and $g_{j}$, considered as functions in $\mathbb{R}^{md}$, i.e., $g_{i}, g_{j}: (\bX_{\{1\}}, \bX_{\{2\}}, \ldots, \bX_{\{m\}}) \rightarrow \mathbb{R}$, are symmetric against the hyperplane $\bX_{\{i\}} = \bX_{\{j\}}$, where the equality of two vectors is defined entry-wise. Without loss of generality, it suffices to show that $g_{1}$ and $g_{2}$ are symmetric against the hyperplane $\bX_{\{1\}} = \bX_{\{2\}}$. Set the permutation matrix $\bP$ as
    \[
    \bP = 
    \begin{bmatrix}
    0 & 1 & \bfzero \\
    1 & 0 & \bfzero \\
    \bfzero & \bfzero & \bI_{m-2} 
    \end{bmatrix},
    \]
    since $\bg$ is permutation-preserving, we have the relationship
    \[
    g_{1}(\bX_{\{2\}}, \bX_{\{1\}}, \bX_{\{3\}}, \ldots, \bX_{\{m\}}) = 
    g_{2}(\bX_{\{1\}}, \bX_{\{2\}}, \bX_{\{3\}}, \ldots, \bX_{\{m\}}).
    \]
    With a modification of the permutation matrix, we can derive that $g_{i}$ and $g_{j}$ are symmetric against the hyperplane $\bX_{\{i\}} = \bX_{\{j\}}$. 
    
    Define $\rho_{1}(\bX_{\{1\}}, \ldots, \bX_{\{m\}}) = g_{1}(\bX_{\{1\}}, \ldots, \bX_{\{m\}})$, therefore, 
    \begin{equation}
    \label{equ:gi_rho1}
    g_{i}(\bX_{\{1\}}, \ldots, \bX_{\{m\}}) = \rho_{1}(\bX_{\{i\}}, \bx_{2}, \ldots, \bx_{i - 1}, \bX_{\{1\}}, \bx_{i+1}, \ldots, \bX_{\{m\}}).    
    \end{equation}
    Denote $\rho_{1}(\bX_{\{1\}}, \ldots, \bX_{\{m\}})$ by $\rho_{1}(\bX_{\{1\}}, h(\bX_{\{2:m\}}))$, where $h(\bX_{\{2:m\}})$ can be a vector-valued function and $\bX_{\{2:m\}}$ refers to the 2nd to the $m$th (last) row of $\bX$. Since $\bg$ is assumed continuous, we also restrict $\rho_{1}$ and $h$ to be continuous functions. We next show that $h$ is permutation-invariant. This can be shown by choosing the permutation matrix $\bP$ as 
    \[
    \bP = 
    \begin{bmatrix}
        1 & \bfzero \\
        \bfzero & \bP^{'}
    \end{bmatrix},
    \]
    where $\bP^{'} \in \mathbb{R}^{(m - 1) \times (m - 1)}$ is an arbitrary permutation matrix. Since $\bg(\bP \bX) = \bP \bg(\bX)$, we have $g_{1}(\bP \bX) = g_{1}(\bX)$ and hence 
    \[\rho_{1}(\bX_{\{1\}}, h(\bP^{'} \bX_{\{2:m\}})) = \rho_{1}(\bX_{\{1\}}, h(\bX_{\{2:m\}})).\] 
    Because $\rho_{1}$ can be arbitrary kriging coefficient function and there is no requirement on the kriging coefficient for the resulting covariance matrix to be positive definite. $\rho_{1}$ can effectively be any continuous function. Furthermore, there is no requirement on $\bX_{\{1\}}$ and $\bX_{\{2:m\}}$ as long as they belong to their compact domain. We conclude that $h$ must be permutation-invariant. Using Proposition~\ref{prp:perm_invariant}, we know that $h(\bX_{\{2:m\}})$ can be approximated arbitrarily close by $\rho_{2}(\sum_{i = 2}^{m} \phi(\bX_{\{i\}}))$ for suitable transformations $\rho_{2}$ and $\phi$. 
    Since $\rho_{1}$ is a continuous function over a compact domain. $\rho_{1}(\bX_{\{1\}}, h(\bX_{\{2:m\}}))$ can also be approximated arbitrarily close by $\rho_{1}(\bX_{\{1\}}, \rho_{2}(\sum_{i = 2}^{m} \phi(\bX_{\{i\}})))$. Hence, we conclude that $g_{i}(\bX_{\{1\}}, \ldots, \bX_{\{m\}})$ can be approximated arbitrarily close by
    \[
    \rho_{1}\left(\bX_{\{i\}}, \rho_{2}\left(\sum_{j \in \{1, \ldots, m\} \setminus \{i\}} \phi(\bX_{\{j\}})\right)\right),
    \]
    for suitable transformations $\rho_{1}$, $\rho_{2}$, and $\phi$. It is also worth noticing that when $k_{1} = 1$, $g_{i}$ can be exactly represented by the expression above, which can be proved in the same manner.
\end{proof}

\section{Supplementary Plots for Section~\ref{sec:sim_study}}
\begin{figure}[h]
  \centering
  \begin{adjustbox}{center}
  \includegraphics[width=1.0\textwidth]{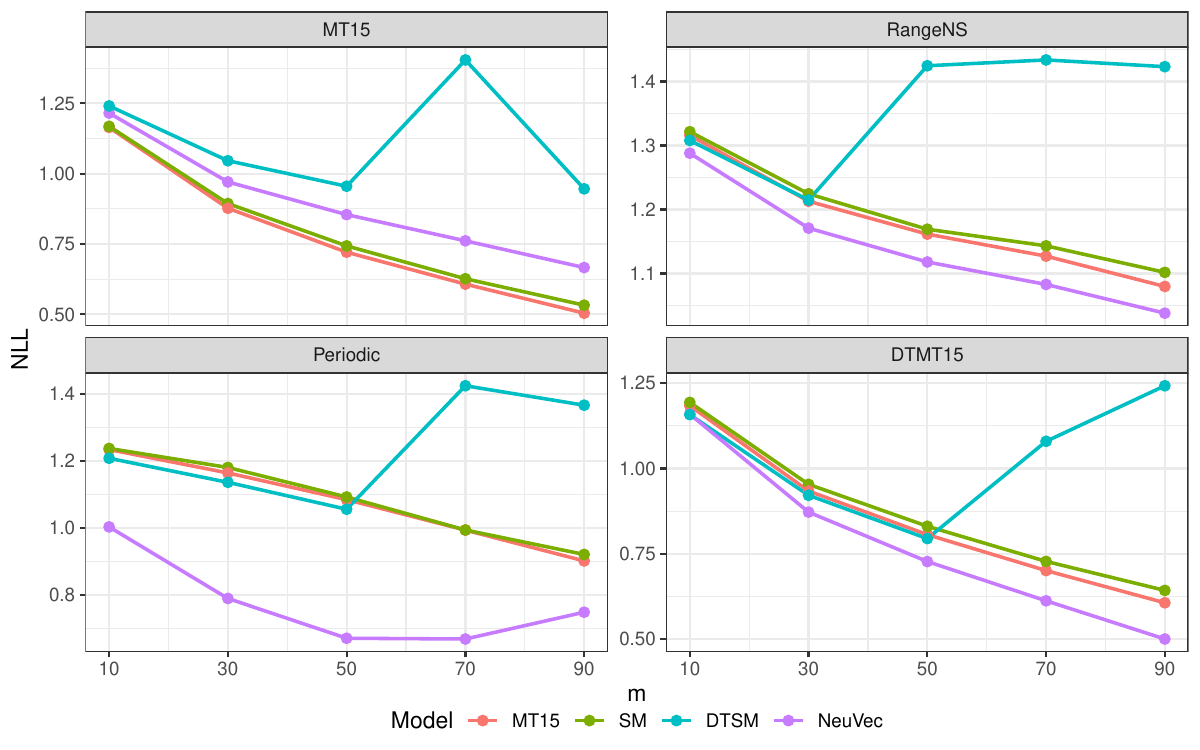}
  \end{adjustbox}
  \caption{Predictive NLL of using the proposed NeuVec kernel and the benchmarks. Each panel corresponds to one simulation scenario.}
  \label{fig:sim_NLL}
\end{figure}


\end{document}